\documentclass[10pt,twocolumn,letterpaper]{article}

\usepackage{cvpr}              

\usepackage{multirow}
\usepackage{xcolor}
\usepackage{colortbl}
\usepackage{amsmath}
\usepackage{amsthm}
\usepackage{stfloats}

\theoremstyle{definition}

\newtheorem{theorem}{Theorem}

\definecolor{cvprblue}{rgb}{0.21,0.49,0.74}
\usepackage[pagebackref,breaklinks,colorlinks,allcolors=cvprblue]{hyperref}

\def\paperID{15310} 
\def\confName{CVPR}
\def\confYear{2026}

\title{MASQuant: Modality-Aware Smoothing Quantization for Multimodal Large Language Models}

\author{
Lulu Hu\thanks{Equal contribution.} \and
Wenhu Xiao\footnotemark[1] \and Xin Chen\footnotemark[1] \and Xinhua Xu \and
Bowen Xu\textsuperscript{\textdagger} \and Kun Li \and Yongliang Tao \\
Alibaba Cloud Computing, Alibaba Group\\
{\tt\small \{chudu.hll, wenhu.xwh, andy.cx, bowen.xbw\}@alibaba-inc.com}
}

\begin{document}
\maketitle

\begingroup
\renewcommand\thefootnote{\textdagger}
\footnotetext{Corresponding author}
\endgroup

\begin{abstract}
 Post-training quantization (PTQ) with computational invariance for Large Language Models~(LLMs) have demonstrated remarkable advances, however, their application to Multimodal Large Language Models~(MLLMs) presents substantial challenges. In this paper, we analyze SmoothQuant as a case study and identify two critical issues: Smoothing Misalignment and Cross-Modal Computational Invariance. To address these issues, we propose Modality-Aware Smoothing Quantization (MASQuant), a novel framework that introduces (1) Modality-Aware Smoothing (MAS), which learns separate, modality-specific smoothing factors to prevent Smoothing Misalignment, and (2) Cross-Modal Compensation (CMC), which addresses Cross-modal Computational Invariance by using SVD whitening to transform multi-modal activation differences into low-rank forms, enabling unified quantization across modalities. MASQuant demonstrates stable quantization performance across both dual-modal and tri-modal MLLMs. Experimental results show that MASQuant is competitive among the state-of-the-art PTQ algorithms. Source code: \url{https://github.com/alibaba/EfficientAI}. 

\end{abstract}

\section{Introduction}
Post-training quantization~(PTQ) has become essential for deploying Large Language Models~(LLMs)~\citep{Qwen2, llama3, gpt4} on resource-constrained devices, and this need is amplified for Multimodal Large Language Models~(MLLMs), which demonstrate impressive cross-modal reasoning capabilities~\citep{kahn2020libri, wenetspeech, singh2019towards, fu2025video, mmmu, omnibench}. PTQ methods based on computational invariance~\citep{awq, smoothquant, omniquant, os, affinequant, flatquant}, particularly channel-wise smoothing~\citep{smoothquant, awq, omniquant}, have proven highly effective for text-only LLMs by redistributing activation outliers through channel-level scaling factors. Recent work has also begun recognizing MLLM-specific characteristics—MBQ~\citep{mbq} observes unequal contributions of visual and text tokens to quantization error, while MQuant~\citep{mquant} finds that visual token activations exhibit much higher magnitudes than text tokens. However, the direct application of channel-wise smoothing to MLLMs remains surprisingly underexplored, raising an important question: do these successful channel-wise smoothing PTQ methods transfer seamlessly to the multimodal setting?

Through systematic analysis of vision-language and omni-modal MLLMs~\citep{qwen2.5-vl,qwen2.5omni}, we identify a fundamental problem. Different modalities exhibit vastly different activation magnitudes—visual tokens typically show ranges 10–100× larger than text and audio tokens. Channel-wise smoothing computes a single scaling factor per channel, but when modalities with such disparate distributions pass through the same layer, the dominant modality's larger activations dictate the smoothing factor. Activations from non-dominant modalities thus become over-smoothed, crushing their signal and causing severe quantization errors. We term this phenomenon smoothing misalignment. A natural solution is to compute separate smoothing factors for each modality. However, this seemingly simple solution introduces a critical flaw: preserving computational invariance under this scheme requires storing distinct quantized weights for each modality. This defeats the fundamental purpose of quantization, which aims to reduce memory footprint through a single low-precision weight representation. The question becomes: \textit{can we collect modality-specific smoothing factors while maintaining a single quantized weight for inference}?

We address this challenge through Modality-Aware Smoothing Quantization ~(MASQuant). Our key idea is to learn dedicated smoothing factors for each modality, but during inference, we use the text-smoothed weights as a base and apply modality-specific low-rank compensation. This design simultaneously resolves smoothing misalignment and preserves computational invariance. Specifically, Modality-Aware Smoothing (MAS) optimizes smoothing factors directly for each modality's activations, eliminating smoothing misalignment and pushing channel-wise smoothing to its optimization limit. To maintain a single weight representation, Cross-Modal Compensation (CMC) leverages a key observation: differences in smoothed activations across modalities are low-rank. We prove this mathematically and use SVD-based whitening to transform these differences into compact low-rank matrices. By compensating for modality-induced variations through lightweight low-rank corrections to the text-smoothed base weights, MASQuant achieves modality-specific adaptation without sacrificing the unified weight structure essential for efficient quantization.

We evaluate MASQuant across diverse MLLM architectures spanning vision-language and omni-modal configurations. Results across all evaluated benchmarks demonstrate consistent superiority over existing channel-wise smoothing PTQ methods. 

In summary, our main contributions are:
\begin{itemize}
\item We identify and formalize smoothing misalignment—the fundamental obstacle in applying channel-wise smoothing PTQ to MLLMs—and resolve it through Modality-Aware Smoothing.

\item We prove that inter-modal activation differences are low-rank, enabling Cross-Modal Compensation to maintain computational invariance with a single set of quantized weights.

\item We present MASQuant, a PTQ method that is effective on both vision-language and omni-modal MLLMs.
\end{itemize}

\section{Related Work}
\paragraph{LLMs Quantization.} LLM quantization methods are broadly categorized into \emph{Quantization-Aware Training} (QAT)~\citep{LLM-QAT,Efficientqat,lora-qat} and \emph{Post-Training Quantization} (PTQ)~\citep{awq,smoothquant,omniquant}. QAT incorporates quantization into training to adapt models to low-precision computation, while PTQ applies quantization directly using calibration data. PTQ approaches include: (1) error compensation via second-order gradients~\citep{gptq} or low-rank correction~\citep{zeroquantv2,aser}; (2) channel-wise smoothing to mitigate outliers~\citep{smoothquant,awq,omniquant,affinequant,flatquant}; (3) rotation-based distribution restructuring~\citep{quip,quarot,spinquant}; and (4) mixed-precision strategies~\citep{mixllm,spqr,squeezellm}.

\paragraph{MLLMs Quantization.}
Quantizing MLLMs presents unique challenges due to cross-modal activation disparities. MQuant~\citep{mquant} identifies that visual token activations can exceed textual ones by 20×, proposing modality-specific quantization. MBQ~\citep{mbq} observes visual tokens are less quantization-sensitive and introduces gradient-weighted error balancing. QSLAW~\citep{qslaw} addresses increased outlier density from multimodal inputs through learnable weight-group scaling. Despite these advances, activation quantization remains inadequately addressed, motivating our modality-aware smooth quantization approach.

\section{Preliminaries}
\label{sec:Preliminary}

\subsection{Computational Invariance based PTQ}
PTQ addresses significant computational and storage challenges by mapping high-precision floating-point tensor $\mathbf{x}$ to low-precision $\mathrm{N}$-bit integer tensor $\hat{\mathrm{x}}_\mathrm{N}$:
\begin{equation}
\label{eq:PTQ}
\hat{\mathrm{x}}_\mathrm{N} = \mathrm{Q}(\mathrm{x}) = \left(\mathrm{clamp}\left(\left\lfloor\frac{\mathrm{x}}{\Delta}\right\rceil + \mathrm{z}, \mathrm{q_{min}}, \mathrm{q_{max}}\right) - \mathrm{z}\right) \cdot \Delta,
\end{equation}
where $\Delta$ is the scale factor, $\mathrm{z}$ is zero-point, $\lfloor\cdot\rceil$ is the rounding-to-nearset operator, and $\text{clamp}$ clips values outside the integer range $[\mathrm{q_{min},q_{max}}]$. We use W$x$A$y$ notation for $x$-bit weights and $y$-bit activations, with two main types: weight-only quantization (e.g., W4A16) and weight-activation quantization (e.g., W8A8, W4A8). For a linear layer $\mathbf{Y} = \mathbf{X}\mathbf{W}$, where $\mathbf{X} \in \mathbb{R}^{T\times D_\text{in}}$,  $\mathbf{W} \in \mathbb{R}^{D_\text{in} \times D_{\text{out}}}$, the layer can be reformulated based on computational invariance as:
\begin{equation}
    \mathbf{Y} = (\mathbf{X} \mathbf{S}^{-1}) \cdot (\mathbf{S} \mathbf{W}),
\end{equation}
where $\mathbf{S}$ can be diagonal matrix \citep{smoothquant, awq, omniquant} or orthogonal matrix \citep{spinquant, flatquant, duquant, quip}. $\mathbf{S}$ reduces the outliers in $\mathbf{X}$ and lead to better quantization reconstruction loss:
\begin{equation}
    L= \mathcal{L}(\mathrm{Q}(\mathbf{X} \mathbf{S}^{-1}) \cdot \mathrm{Q}(\mathbf{S} \mathbf{W}), \mathbf{X}\mathbf{W}).
\end{equation}
 In this paper, we demonstrate that applying $\mathbf{S}$ for different modalities in MLLMs can achieve robust and effective PTQ performance.

\subsection{SVD-based Whitening}
Prior work~\citep{svdllm, svdllm-v2} uses SVD-based whitening for low-rank weight compression. Given input activations $\mathbf{X}$ and weights $\mathbf{W}$, the whitening transform is derived by decomposing the activation covariance:
\begin{equation}
    \mathbf{P\Lambda P}^\top = \mathrm{SVD}(\mathbf{X}^\top\mathbf{X}).
\end{equation}
A whitening matrix $\mathbf{T} = (\mathbf{P\Lambda}^{1/2})^\top$ transforms $\mathbf{XW}$ into $\mathbf{(XT)(T^{-1}W)}$ which satisfies $\mathbf{(XT^{-1})^\top(XT^{-1}) = I}$, yielding whitened activations. SVD-LLM v2~\citep{svdllm-v2} shows that performing SVD on $\mathbf{TW}$ and truncating to rank $r$ minimizes reconstruction error $\|\mathbf{XW - XW'}\|_F$:
\begin{gather}
\mathbf{U, S, V} = \mathrm{SVD}(\mathbf{TW}), \\
\mathbf{U_r, S_r, V_r} = \mathrm{Trunc}_r(\mathbf{U, S, V}), \\
\mathbf{W'} = \mathbf{T^{-1}U_rS_rV_r}.
\end{gather}
While existing methods apply whitening for weight compression, we are the first to show that whitening can effectively compensate for cross-modal weight differences, enabling unified quantization across modalities.

\begin{figure*}[ht!]
    \centering
    \includegraphics[width=1.0\textwidth]{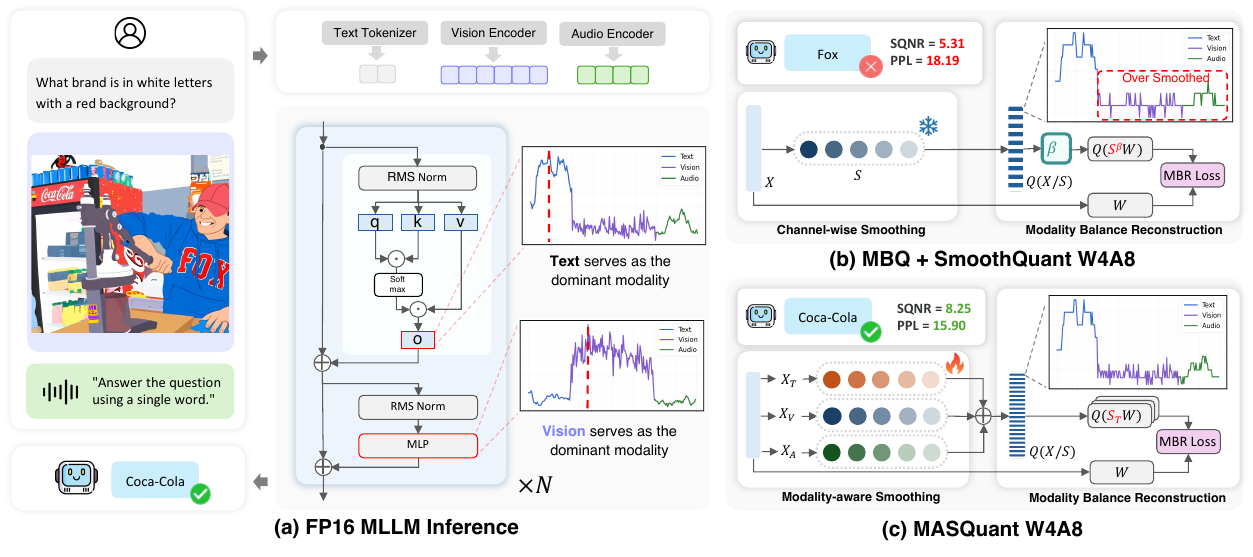}
    \caption{ 
    (a). Activation distributions during multimodal reasoning in MLLMs. Different dominant modalities emerge across MLLM components, leading to failure of general PTQ methods that diminish vision importance. (b). Impact of SmoothQuant's uniform smoothing factors $S$ computation on MLLM quantization performance (low SQNR, high PPL). (c). MASQuant addresses smoothing misalignment through the combination of MAS and CMC, thereby significantly enhancing PTQ performance in MLLMs. MBR Loss indicates Modality Balanced Reconstruction Loss.}
    \label{fig:Smoothing}
\end{figure*}
\section{MASQuant}
\label{sec:contrastive_experiments}

In this section, we present our Modality-Aware Smooth Quantization~(MASQuant) framework includes Modality-Aware Smoothing~(MAS) followed by Cross-Modal Compensation~(CMC), designed to address the critical issue of smoothing misalignment and cross-modal computational invariance in MLLMs.  

\subsection{Motivation}
\label{subsec:misalignment}
\paragraph{Revisiting Smoothing Factors.}
Existing methods compute smoothing factors $s$ through closed-form solutions. SmoothQuant~\citep{smoothquant} uses:
\begin{equation}
\label{eq:sq_sfs}
s_i = \frac{\max_t |x_{t,i}|^\beta}{\max_j |w_{j,i}|^{1-\beta}},
\end{equation}
while AWQ~\citep{awq} adopts:
\begin{equation}
\label{eq:awq_sfs}
s_i = (\frac{\sum_{t=1}^n|x_{t,i}|}{n})^\beta, \beta^*=\textrm{arg}\min_\beta L(\mathrm{s}).
\end{equation}
MBQ~\citep{mbq} extends this to MLLMs by adjusting the contribution of different modality activations when searching for $\beta$ values. However, all these methods optimize only the hyperparameter $\beta$ while the smoothing factors themselves are never treated as free parameters. OmniQuant~\citep{omniquant} demonstrates that learnable smoothing factors can better minimize quantization error, we adopt this principle for MLLMs---instead of searching over $\beta$, we directly optimize matrix $\mathbf{S}$ whose diagonal entries are smoothing factors $s_i$:
\begin{equation}
    \mathbf{S}^* = \mathop{\arg\min}\limits_{\mathbf{S}} \mathcal{L}( \mathrm{Q}\left(\mathbf{X} \mathbf{S}^{-1} \right) \mathrm{Q}(\mathbf{S} \mathbf{W}),\mathbf{X} \mathbf{W}) .
\end{equation}
This formulation is more flexible—it can discover optimal smoothing patterns beyond what any $\beta$-parameterized formula can express.
\paragraph{Smoothing Misalignment.}
MLLMs process multiple modalities with vastly different activation magnitudes ~\citep{mquant}. we measure the activation range per channel: $\mathbf{R}^m_i = \max_{t}|\mathbf{x}_{t,i}^m|$ for modality $m \in \mathcal{M}$, where $\mathbf{x}_{t,i}^m$ is the activation of the $t$-th token in channel $i$.  When calibrating on mixed-modality data, the unified smoothing factor is determined by the maximum range across all tokens:
\begin{equation}
\mathbf{s}^{\text{uni}}_i = \frac{\left(\max_t |\mathrm{x}_{t,i}|\right)^\beta}{\left(\max_j |w_{j,i}|\right)^{1-\beta}} = \frac{\left(\max_{m,t} |\mathrm{x^{m}}_{t,i}|\right)^\beta}{\left(\max_j |w_{j,i}|\right)^{1-\beta}} \propto \mathbf{R}_i^{m^*},
\end{equation}
where $m^*=\arg\max_m\mathbf{R}_i^m$ is the dominant modality. This unified factor $\mathbf{s}^{\text{uni}}$ aligns with the dominant modality's ideal factor but severely mismatches others—a phenomenon we term smoothing misalignment. Non-dominant modalities suffer degraded quantization quality as their activations are scaled by factors optimized for a different magnitude regime. Figure~\ref{fig:Smoothing} illustrates this phenomenon in detail.

\begin{figure}[t!]
\centering
\begin{subfigure}{1.0\columnwidth}
\centering
\includegraphics[width=0.9\textwidth]{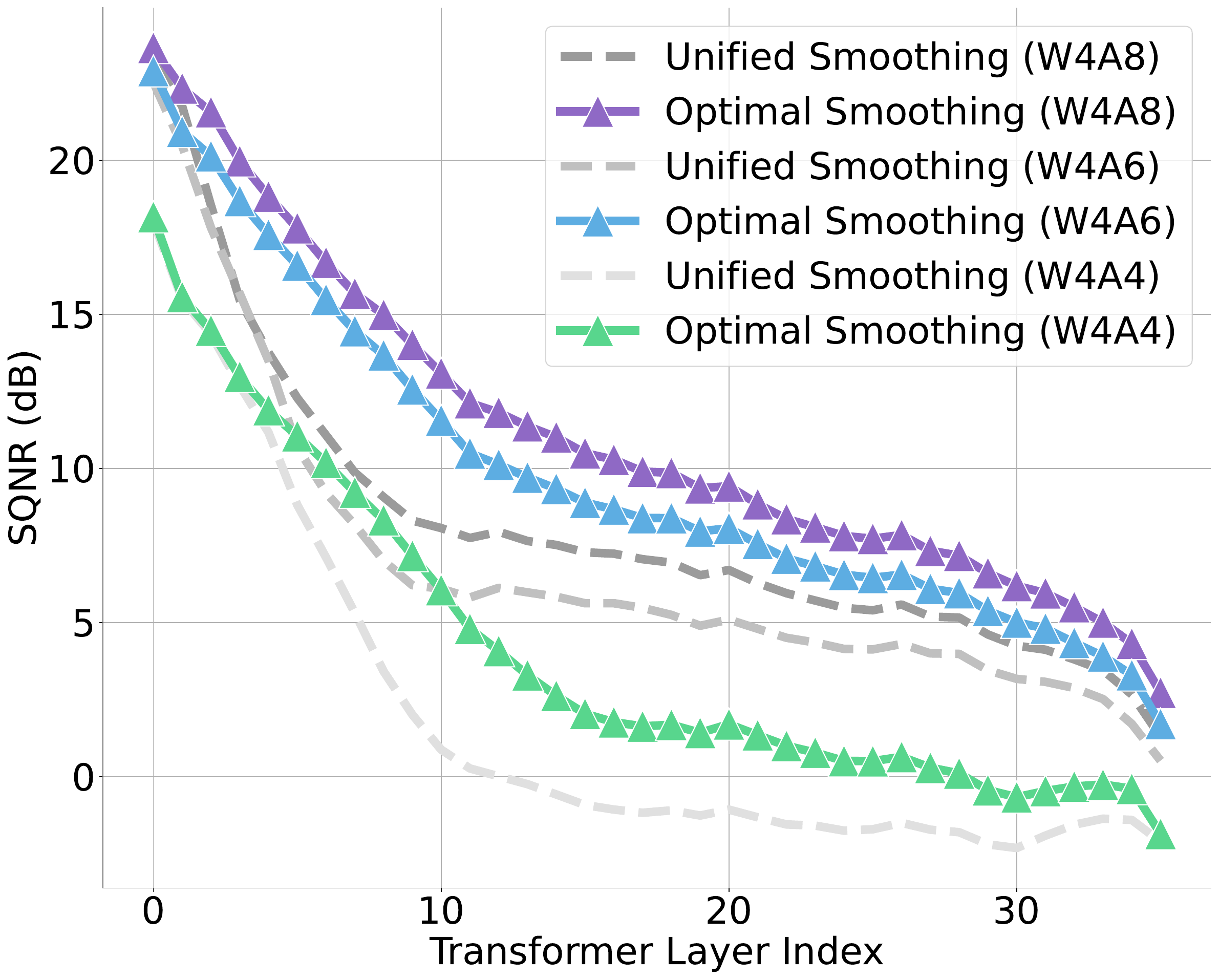}
\vspace{-3mm}
\end{subfigure}
\caption{Comparative analysis of SQNR degradation of Qwen2.5-Omni-3B under multimodal input condition. We selected 32 samples from OmniBench and computed the average SQNR for each layer.}
\label{fig:mas_sqnr}
\end{figure}

\subsection{Modality-Aware Smoothing}
Smoothing misalignment arises from unified scaling across heterogeneous modalities. We address this by maintaining modality-specific smoothing factors, eliminating dominance at its root. Let $\mathcal{M}$ denote the set of supported modalities (e.g., text, image, audio). For each modality $m \in \mathcal{M}$, we first obtain the initial values of the modality-aware smoothing factors  $\mathbf{S}_{m} \in \mathbb{R}^{d \times d}$:
\begin{equation}
\mathbf{S}_m = \text{diag}(\mathbf{s}^m), \quad 
s^m_i = \sqrt{\frac{\max_{t} |x^{m}_{t,i}|}{\max_j |w_{j,i}|}}, 
\quad m \in \mathcal{M} .
\end{equation}
We then optimize the $\mathbf{S}_{m}$ by minimizing MAE loss~\citep{mbq} on modality-specific data, We simplify the notation $\{\mathbf{S}_{m}\}_{m \in \mathcal{M}}$ as $\{\mathbf{S}_{m}\}$ for brevity:
\begin{equation}
    \{\mathbf{S}_{m}^*\} = \mathop{\arg\min}\limits_{\{\mathbf{S}_{m}\}} \sum_{m \in \mathcal{M}} (\lambda_m\cdot\mathcal{L}_{\mathrm{MAE}}(\mathbf{S}_m,\mathbf{X}_{m}, \mathbf{W})) , 
\end{equation}
where $\lambda_m$ denotes the loss weight for modality $m$. For modality $m$, the quantization reconstruct MAE loss is:
\begin{equation}
\label{eq:mas_quant}
  \mathcal{L}_\mathrm{MAE}= ||\mathrm{Q}\left(\mathbf{X}_{m} \mathbf{S}_{m}^{-1}\right) \cdot \mathrm{Q}\left(\mathbf{S}_{m} \mathbf{W}\right)-\mathbf{X}_{m} \mathbf{W}|| .
\end{equation}
 This ensures $\mathbf{S}_{m}^*$ captures modality-specific statistics without cross-modality interference. We quantify benefit of $\mathbf{S}_m$ through SQNR~\citep{sqnr1,sqnr2} analysis. For a token $\mathbf{x}_t$, SQNR measures quantization quality as:
\begin{equation}
    \mathrm{SQNR}(\mathbf{x_t}) = 10 \log_{10} \left( \frac{\|\mathbf{x_t}\|^2}{\|\mathbf{x_t} - \mathrm{Q}(\mathbf{x_t})\|^2} \right),
\label{eq:sqnr}
\end{equation}
Under channel-wise smoothing with smoothing factor $s$, when $\mathbf{x_t} \gg \Delta_t$ in Equation~\ref{eq:PTQ}, quantization error $e_t$ can be approximated as uniformly distributed over $\left[-\frac{\Delta_t}{2}, \frac{\Delta_t}{2}\right]$. Thus, the mean squared error across $d$ channels is $d \cdot \frac{\Delta_t^2}{12}$. This yields SQNR of SmoothQuant (omitting constants):
\begin{equation}
    \mathrm{SQNR}(\mathbf{s},\mathbf{x_t}) \propto \frac{\sum_{i=1}^{d} \left\| \frac{{x_{t,i}}}{{s_i}} \right\|^2}{d \cdot \frac{\Delta_t^2}{12}} \propto \frac{\sum_{i=1}^{d} \left( \frac{{x_{t,i}}}{{s_i}} \right)^2}{\left( \max_i \left| \frac{{x_{t,i}}}{{s_i}} \right| \right)^2},
\end{equation}
based on which the degradation can be quantified.
\begin{theorem}\label{SQNR theorem}[SQNR Degradation under Smoothing Misalignment]
\label{thm:general_degradation}
Consider a layer processing multimodal inputs with dominant modality $m$ and non-dominant modality $m'$. Let $\boldsymbol{\alpha}^{m,m'}_i = R^m_i / R^{m'}_i$ denote the range ratio at channel $i$, where $R^m_i$ and $R^{m'}_i$ are the activation ranges. Using unified smoothing $\mathbf{s}^{\text{uni}} \approx \mathbf{s}^m$ yields SQNR degradation (exact as $\alpha^{m,m'} \gg 1$):
\begin{equation}
\begin{aligned}
\mathrm{SQNR}(\mathbf{s}^{\text{uni}}, \mathbf{x}_t^{m'}) & = \mathrm{SQNR}(\mathbf{s}^{m'}, \mathbf{x}_t^{m'}) \\ &- 10\log_{10}\left(\frac{{d}(\min_i{(\boldsymbol\alpha^{m,m'}_i)^2})}{\sum_{i=1}^{d} \frac{1}{(\boldsymbol\alpha^{m,m'}_i)^2}}\right).
\label{eq:general_bound}
\end{aligned}
\end{equation}

\end{theorem}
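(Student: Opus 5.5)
We work directly from the SQNR proxy derived just above Theorem~\ref{thm:general_degradation}, namely $\mathrm{SQNR}(\mathbf{s},\mathbf{x}_t)\propto \sum_i (x_{t,i}/s_i)^2 / (\max_i |x_{t,i}/s_i|)^2$. Since the proportionality constants do not depend on $\mathbf{s}$, the dB difference between $\mathrm{SQNR}(\mathbf{s}^{\text{uni}},\mathbf{x}_t^{m'})$ and $\mathrm{SQNR}(\mathbf{s}^{m'},\mathbf{x}_t^{m'})$ is $10\log_{10}$ of the ratio of the two proxies. The plan is to evaluate this ratio under the square-root smoothing $s^m_i=\sqrt{R^m_i/\max_j|w_{j,i}|}$ used to initialize MAS, with $\mathbf{s}^{\text{uni}}\approx\mathbf{s}^m$. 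This gives the key identity
\begin{equation*}
s^{\text{uni}}_i/s^{m'}_i=\sqrt{R^m_i/R^{m'}_i}=\sqrt{\alpha^{m,m'}_i}.
\end{equation*}
(With a general $\beta$ one would get $(\alpha_i)^{\beta}$. We fix the square-root form so the exponent matches the statement.)

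\textbf{Step 1: rewrite the smoothed activations.} Write $y_i = x^{m'}_{t,i}/s^{m'}_i$ for the activations of the non-dominant modality under its own optimal smoothing. Under unified smoothing they become $y_i/\sqrt{\alpha_i}$. Hence
\begin{equation*}
\frac{\mathrm{SQNR}(\mathbf{s}^{\text{uni}})}{\mathrm{SQNR}(\mathbf{s}^{m'})}=\frac{\sum_i y_i^2/\alpha_i}{\sum_i y_i^2}\cdot\frac{(\max_i|y_i|)^2}{(\max_i |y_i|/\sqrt{\alpha_i})^2}.
\end{equation*}

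\textbf{Step 2: use the flattening property of the modality's own smoothing.} The motivation for SmoothQuant-style factors is that $\mathbf{s}^{m'}$ equalizes channel ranges. So we model the smoothed token as flat, $|y_i|\approx c$ for all $i$, which is the regime in which the per-token SQNR is maximal.

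Substituting this gives
\begin{equation*}
\sum_i y_i^2=dc^2,\qquad \sum_i y_i^2/\alpha_i=c^2\sum_i \alpha_i^{-1},\qquad \max_i|y_i|/\sqrt{\alpha_i}=c/\sqrt{\min_i\alpha_i}.
\end{equation*}
The ratio then collapses to a closed expression in the $\alpha_i$ alone, of the form $\bigl(\tfrac{1}{d}\sum_i \alpha_i^{-1}\bigr)\cdot \min_i\alpha_i$. Taking $-10\log_{10}$ yields a degradation term of the type $10\log_{10}\bigl(d\min_i\alpha_i/\sum_i\alpha_i^{-1}\bigr)$.

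\textbf{Step 3: reconcile exponents with the statement.} The statement has squares, $(\alpha_i)^2$. These arise if the range ratio enters the smoothing ratio linearly, $s^{\text{uni}}_i/s^{m'}_i=\alpha_i$, i.e.\ the $\beta=1$ convention from the unified-factor formula, where $\mathbf{s}^{\text{uni}}\propto\mathbf{R}^{m^*}$. I would therefore adopt that proportionality, which was just stated in the Smoothing Misalignment paragraph. Then every $\sqrt{\alpha_i}$ above becomes $\alpha_i$ and the squares appear exactly as in \eqref{eq:general_bound}.

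The qualifier ``exact as $\alpha\gg1$'' is justified at this point. Two conditions must hold for the approximations to be exact:
\begin{itemize}
\item the dominant-modality approximation $\max_{m,t}|x|=R^m_i$ becomes exact;
\item the uniform-noise model must still hold.
\end{itemize}
For the second, I would simply assume it, since the $\mathbf{x}\gg\Delta$ assumption is inherited from the preceding derivation.

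\textbf{Main obstacle.} The main obstacle is Step 2. The flatness assumption $|y_i|\approx c$ is heuristic, and without it the result is only a bound: the numerator ratio lies between $\min_i\alpha_i^{-2}$ and $\max_i\alpha_i^{-2}$, and the max ratio depends on which channel attains the maximum. I would present the theorem as holding under flat post-smoothing activations. I would add a remark that for general $y$ one obtains two-sided bounds by replacing the averages with the extreme weights.
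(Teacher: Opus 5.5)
\textbf{Verdict: the approach matches the paper's, but there is an algebra error in the last step.} Your route is the paper's own proof. You take the ratio of the two SQNR proxies, use $s^{\text{uni}}_i/s^{m'}_i=\alpha_i$ (the paper works directly with $x_i/R_i$, i.e.\ $s_i\propto R_i$, $\beta=1$), and evaluate at a token that is flat under the modality's own smoothing, $x_1/R_1\approx\cdots\approx x_d/R_d$. The error is the last line of Step~2, and Step~3 inherits it. From the ratio you correctly compute, $\min_i\alpha_i\cdot\frac1d\sum_i\alpha_i^{-1}$, applying $-10\log_{10}$ gives $10\log_{10}\bigl(d/(\min_i\alpha_i\sum_i\alpha_i^{-1})\bigr)$. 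It does not give $10\log_{10}\bigl(d\min_i\alpha_i/\sum_i\alpha_i^{-1}\bigr)$. Once $\beta=1$, the same computation reads
\[
-10\log_{10}\Bigl(\frac{(\min_i\alpha_i)^2}{d}\sum_{i=1}^d\frac{1}{\alpha_i^2}\Bigr)
=10\log_{10}\frac{d}{(\min_i\alpha_i)^2\sum_{i=1}^d\alpha_i^{-2}}
=10\log_{10}\frac{d\,(\max_i 1/\alpha_i)^2}{\sum_{i=1}^d 1/\alpha_i^2}.
\]
So $(\min_i\alpha_i)^2$ belongs in the denominator, not next to $d$ in the numerator. This is exactly $-\Delta$ in the paper's proof. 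Your assertion that the squares then appear ``exactly as in'' the statement holds only because of this inversion slip.

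\textbf{The statement's displayed formula is a typo, and you should not force a match with it.} With $d\cdot\min_i(\alpha_i)^2$ in the numerator, it reads $(\max_i 1/\alpha_i)^2$ as $\min_i\alpha_i^2$, dropping a reciprocal. No correct version of the argument can produce it. The proxy $\sum_i(x_i/s_i)^2/\max_i(x_i/s_i)^2$ is invariant under $\mathbf{s}\mapsto c\,\mathbf{s}$. Hence for constant $\alpha_i\equiv\alpha$ the degradation must be zero; the paper's proof itself notes equality iff $\alpha_1=\cdots=\alpha_d$. The literal formula instead predicts $40\log_{10}\alpha$ dB, which is large exactly in the regime $\alpha\gg1$ the theorem targets. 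You should therefore end with the corrected expression above. It is nonnegative because $(\min_i\alpha_i)^2\sum_i\alpha_i^{-2}\le d$, and it vanishes iff all $\alpha_i$ coincide.

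\textbf{A smaller point about $\beta$.} The weight factor cancels in $s^{\text{uni}}_i/s^{m'}_i$ for every $\beta$, so the exponent in the final formula is $2\beta$. The square-root initialization therefore yields first powers. Your parenthetical claim that fixing $\beta=\tfrac12$ makes the exponent match the statement is backwards. Working with $\beta=1$ from the start, as the paper does, removes the detour.
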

\begin{proof}
To simplify notation, we let $x_i=x_{t,i}^{m'}, R=R_i^{m'},$and $\alpha_i=\alpha_i^{m,m'}$. We compare the upper bounds of the SQNR for the token $\mathbf{x}_t^{m'}$ under two smoothing strategies, which are achieved when the token's activations are uniform across channels(i.e., $x_{1}/R_{1} \approx x_{2}/R_{2} \approx \cdots \approx x_{d}/R_{d}$):
The SQNR for optimal smoothing is proportional to the dimension $d$:
\begin{equation}
\mathrm{SQNR}(\mathbf{s}^{m'}, \mathbf{x}) = 10\log_{10} \frac{\sum_{i=1}^{d} (x_i/R_i)^2}{(\max_i |x_i/R_i|)^2} = 10\log_{10}d, 
\end{equation}
and SQNR for unified smoothing upper bound is given by:
\begin{equation}
\begin{aligned}
\mathrm{SQNR}(\mathbf{s}^{\text{uni}}, \mathbf{x}) =\log_{10}\frac{\sum_{i=1}^{d} 1/\alpha_i^2}{(\max_i 1/\alpha_i)^2}.
\end{aligned}
\end{equation}
The difference between the two SQNR upper bounds is:
\begin{equation}
\begin{aligned}
\Delta = \underbrace{10\log_{10}\left(\frac{1}{d \cdot (\max_i \frac{1}{\alpha_i})^2} \sum_{i=1}^{d} \frac{1}{\alpha_i^2}\right)}_{\leq 0, \text{ equality iff } \alpha_1=\cdots=\alpha_d}. 
\end{aligned}
\end{equation}
\end{proof}
The results in Figure~\ref{fig:mas_sqnr} validate Theorem 1 and Figure~\ref{fig:range_ratio_omni} and~\ref{fig:range_ratio_vl} provide distribution of $\alpha_{i}^{m,m'}$ is non-uniform. Based on this, the inference process of the MAS becomes as follows:
 \begin{equation}
\mathbf{Y} = \mathrm{Q}\left(\mathbf{X}_{m} \mathbf{S}_{m}^{-1}\right) \cdot \mathrm{Q}\left(\mathbf{S}_{m} \mathbf{W}\right),\quad m \in \mathcal{M}.
\end{equation}



\begin{figure}[t!]
    \centering
    \includegraphics[width=0.99\linewidth]{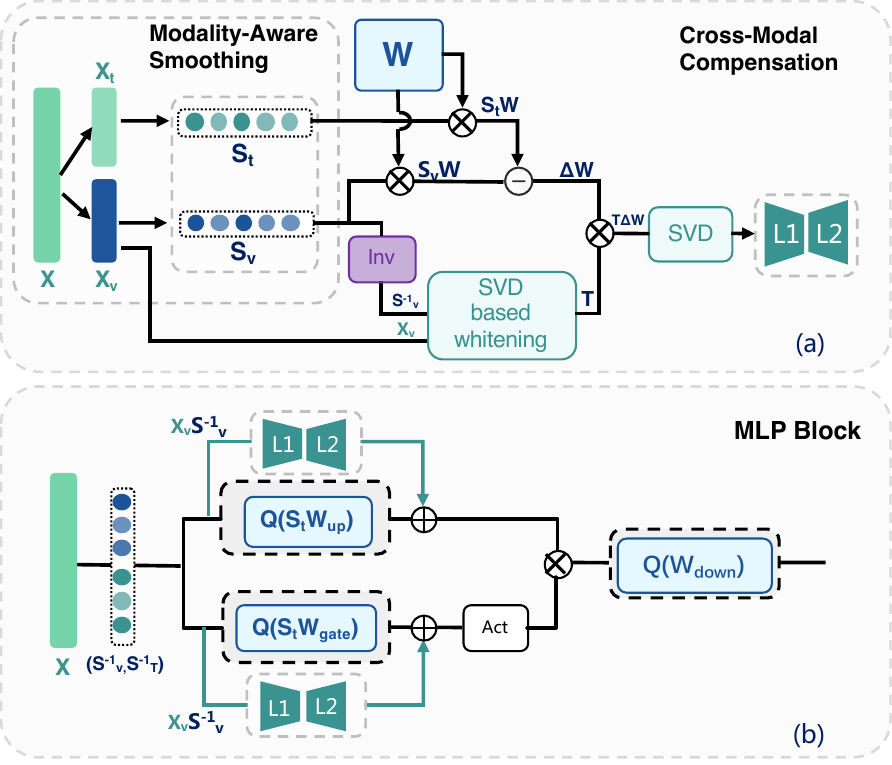}
    \caption{ 
    The illustrated case demonstrates a text-vision dual-modal setting. (a) Schematic workflow of MAS and CMC with calibration data, (b) Illustration of how low-rank matrices L1 and L2 in CMC are utilized in MASQuant, exemplified with an MLP block.}
    \label{fig:MASQuant}
\end{figure}

\subsection{Cross-Modal Compensation}
While MAS eliminates smoothing misalignment, it produces modality-specific quantized weight matrices $\mathrm{Q}(\mathbf{S}_m \mathbf{W})$ that violate computational invariance---PTQ requires a single quantized weight across all modalities.

Our strategy is to store only one quantized weight $\mathrm{Q}(\mathbf{S}_t\mathbf{W})$ using text smoothing as reference, and compensate for other modalities via low-rank corrections. Consider vision inputs: ideally we compute $\mathbf{X}_v\mathbf{S}_v^{-1} \cdot (\mathbf{S}_v\mathbf{W})$, but using the shared weight produces a residual:
\begin{equation}
    \mathbf{\Delta Y} = \mathbf{X}_v\mathbf{S}_v^{-1} \cdot \underbrace{(\mathbf{S}_v\mathbf{W} - \mathrm{Q}(\mathbf{S}_t\mathbf{W}))}_{\mathbf{\Delta W}}.
\end{equation}
A natural approach is low-rank approximation of $\mathbf{\Delta W}$. However, directly applying SVD fails: it does not minimize the output residual $\mathbf{\Delta Y}$, and $\mathbf{\Delta W}$ lacks low-rank structure.
Our key insight is that whitening the activations induces low-rank structure. We compute the whitening transform via:
\begin{equation}
\mathrm{SVD}\big((\mathbf{X}_v\mathbf{S}_v^{-1})^\top(\mathbf{X}_v\mathbf{S}_v^{-1})\big) = \mathbf{P\Lambda P}^\top, \quad \mathbf{T} = (\mathbf{P\Lambda}^{1/2})^\top.
\end{equation}
which ensures $(\mathbf{X}_v\mathbf{S}_v^{-1})\mathbf{T}^{-1}$ is orthonormal. The whitened residual $\mathbf{T}(\mathbf{\Delta W})$ exhibits strong low-rank structure (see Figure~\ref{fig:effective_rank}), enabling accurate approximation via truncated SVD:
\begin{equation}
    \mathrm{SVD}(\mathbf{T}(\mathbf{\Delta W})) = \mathbf{U\Sigma V}^\top \approx \mathbf{U}_r\mathbf{\Sigma}_r\mathbf{V}_r^\top.
\end{equation}
Inverting the whitening yields the low-rank correction:
\begin{equation}
\label{eq:l1l2}
    \mathbf{\Delta W} \approx \mathbf{L}_1\mathbf{L}_2, \quad \text{where} \quad \mathbf{L}_1 = \mathbf{T}^{-1}\mathbf{U}_r, \; \mathbf{L}_2 = \mathbf{\Sigma}_r\mathbf{V}_r^\top.
\end{equation}

\begin{theorem}[Optimal Low-rank Compensation]\label{CMC theorem}
The rank-$r$ matrix $\mathbf{L}^*=\mathbf{L_1}\mathbf{L_2}$, where $\mathbf{L_1}$, $\mathbf{L_2}$ are given by the rank-$r$ truncated SVD defined in Eq \ref{eq:l1l2}, minimizes the reconstruction loss $\mathcal{L}(\mathbf{L}) = \left\| \mathbf{X}_v\mathbf{S}_v^{-1} (\mathbf{\Delta W} - \mathbf{L}) \right\|_F^2$. i.e,
\begin{equation}
    \mathbf{L}^* = \operatorname*{arg\,min}_{\text{rank}(\mathbf{L}) \leq r}\left\| \mathbf{X}_v\mathbf{S}_v^{-1} (\mathbf{\Delta W} - \mathbf{L}) \right\|_F^2.
\end{equation}
\end{theorem}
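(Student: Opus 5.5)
Write $\mathbf{A}=\mathbf{X}_v\mathbf{S}_v^{-1}$. The plan is to reduce the weighted low-rank problem to an unweighted one via the whitening transform, and then invoke the Eckart--Young--Mirsky theorem. The key fact is that $\mathbf{T}=(\mathbf{P}\mathbf{\Lambda}^{1/2})^\top=\mathbf{\Lambda}^{1/2}\mathbf{P}^\top$ satisfies $\mathbf{T}^\top\mathbf{T}=\mathbf{P}\mathbf{\Lambda}\mathbf{P}^\top=\mathbf{A}^\top\mathbf{A}$. Consequently, for any matrix $\mathbf{M}$,
\begin{equation*}
\|\mathbf{A}\mathbf{M}\|_F^2=\mathrm{tr}(\mathbf{M}^\top\mathbf{A}^\top\mathbf{A}\mathbf{M})=\mathrm{tr}(\mathbf{M}^\top\mathbf{T}^\top\mathbf{T}\mathbf{M})=\|\mathbf{T}\mathbf{M}\|_F^2 .
\end{equation*}
Applying this with $\mathbf{M}=\mathbf{\Delta W}-\mathbf{L}$ gives $\mathcal{L}(\mathbf{L})=\|\mathbf{T}\mathbf{\Delta W}-\mathbf{T}\mathbf{L}\|_F^2$.

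Next, I would assume $\mathbf{A}^\top\mathbf{A}$ is positive definite. This holds when $\mathbf{A}$ has full column rank, which is generic when the number of calibration tokens exceeds $D_{\text{in}}$; otherwise a small ridge term can be added, as the paper implicitly does by writing $\mathbf{T}^{-1}$. Under this assumption $\mathbf{T}$ is invertible, so $\mathbf{L}\mapsto\mathbf{B}=\mathbf{T}\mathbf{L}$ is a bijection on matrices that preserves rank. The constrained problem $\min_{\mathrm{rank}(\mathbf{L})\le r}\mathcal{L}(\mathbf{L})$ is therefore equivalent to $\min_{\mathrm{rank}(\mathbf{B})\le r}\|\mathbf{T}\mathbf{\Delta W}-\mathbf{B}\|_F^2$.

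By Eckart--Young--Mirsky, the minimizer of this unweighted problem is the truncated SVD $\mathbf{B}^*=\mathbf{U}_r\mathbf{\Sigma}_r\mathbf{V}_r^\top$ of $\mathbf{T}\mathbf{\Delta W}$, and the optimal value is $\sum_{i>r}\sigma_i^2$. Mapping back gives $\mathbf{L}^*=\mathbf{T}^{-1}\mathbf{B}^*=(\mathbf{T}^{-1}\mathbf{U}_r)(\mathbf{\Sigma}_r\mathbf{V}_r^\top)=\mathbf{L}_1\mathbf{L}_2$, which is exactly Eq.~\ref{eq:l1l2}. Its rank is at most $r$, so it is feasible.

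The main obstacle is the invertibility of $\mathbf{T}$. If $\mathbf{A}$ is rank-deficient, the identity $\|\mathbf{A}\mathbf{M}\|_F=\|\mathbf{T}\mathbf{M}\|_F$ still holds. In that case I would restrict $\mathbf{T}$ to the range of $\mathbf{P}$ (the nonzero eigenvalues) and use a pseudo-inverse. Then $\mathbf{T}^{+}\mathbf{B}^*$ attains the lower bound $\sum_{i>r}\sigma_i^2$, because every feasible $\mathbf{L}$ gives $\mathbf{T}\mathbf{L}$ of rank at most $r$. Hence optimality persists, though uniqueness does not. I would state this as a remark and give the main proof under the full-rank assumption.
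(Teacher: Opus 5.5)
Your proposal is correct and follows essentially the same route as the paper. Both arguments rest on the whitening identity $\|\mathbf{X}_v\mathbf{S}_v^{-1}\mathbf{M}\|_F=\|\mathbf{T}\mathbf{M}\|_F$ followed by Eckart--Young on $\mathbf{T}\mathbf{\Delta W}$; the paper gets the identity by factoring $\mathbf{X}_v\mathbf{S}_v^{-1}=\mathbf{U}\mathbf{T}$ with $\mathbf{U}$ having orthonormal columns, whereas you use $\mathbf{T}^\top\mathbf{T}=(\mathbf{X}_v\mathbf{S}_v^{-1})^\top\mathbf{X}_v\mathbf{S}_v^{-1}$, and your rank-preserving bijection $\mathbf{L}\mapsto\mathbf{T}\mathbf{L}$ (plus the pseudo-inverse remark) makes explicit the lower-bound step that the paper's chain of equalities, which only evaluates $\mathcal{L}(\mathbf{L}^*)$, leaves implicit.
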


\begin{proof}
Considering only two modalities, text and vision, and performing weight-only quantization exclusively, As defined, we have
\begin{equation}
    \mathbf{L}^* = \mathbf{T}^{-1} \left( \textrm{Trunc}_r(\mathbf{T} \mathbf{\Delta W}) \right),
\end{equation}
where $\text{Trunc}_r(\cdot)$ denotes the best rank-$r$ approximation obtained via truncated SVD., and the whitening matrix $\mathbf{T} = (\mathbf{P} \mathbf{\Lambda}^{1/2})^\top$.
On the other hand, the SVD decomposition of the covariance matrix is $(\mathbf{X}_v\mathbf{S}_v^{-1})^\top(\mathbf{X}_v\mathbf{S}_v^{-1})=\mathbf{P}\Lambda\mathbf{P}^\top$. It leads to $\mathbf{X}_v\mathbf{S}_v^{-1}=\mathbf{U}\mathbf{\Lambda}^{1/2} \mathbf{P}^\top=\mathbf{U}\mathbf{T}$, where $\mathbf{U}$ is orthonormal. Therefore, we have

\begin{equation}
\begin{aligned}
\mathcal{L}(\mathbf{L^*}) &= ||\mathbf{X}_v\mathbf{S}_v^{-1}(\mathbf{\Delta W}-\mathbf{L^*})||^2_F \\
&= ||\mathbf{UT}(\mathbf{\Delta W}-\mathbf{L^*})||^2_F \\
&= ||\mathbf{UT}(\mathbf{\Delta W}-\mathbf{T}^{-1}\textrm{Trunc}_r(\mathbf{T\Delta W}))||^2_F \\
&= ||\mathbf{T}\mathbf{\Delta W}-\textrm{Trunc}_r(\mathbf{T\Delta W})||^2_F \\
&= ||\text{SVD}(\mathbf{T}\Delta \mathbf{W})||_F^2 \\
&= ||\text{SVD}(\mathbf{U}^{-1}\mathbf{X}_v\mathbf{S}_v^{-1}\Delta \mathbf{W})||_F^2\\
&= ||\text{SVD}(\mathbf{X}_v\mathbf{S}_v^{-1}\Delta \mathbf{W})||_F^2 = L_{min}^2.
\end{aligned}
\end{equation}
So the designed SVD truncation ensures the theoretical minimum truncation loss.
\end{proof}
The theorem \ref{CMC theorem} shows that SVD-based whitening enables low-rank compensation to effectively bridge modality gaps. The final inference combines the base quantized output with modality-specific corrections:
\begin{equation}
\mathbf{Y} =
\begin{cases}
\mathrm{Q}(\mathrm{X}_m\mathbf{S}_m^{-1}) \cdot \mathrm{Q}(\mathbf{S}_t\mathbf{W}), & m = \text{text} \\
\begin{aligned}
&\mathrm{Q}(\mathbf{X}_m\mathbf{S}_m^{-1}) \cdot \mathrm{Q}(\mathbf{S}_t\mathbf{W}) \\
&\quad + \mathbf{X}_m\mathbf{S}_m^{-1} \cdot \mathbf{L}_1^m\mathbf{L}_2^m. \end{aligned} & m \neq \text{text} 
\end{cases}
\end{equation}
This maintains a single quantized weight for efficiency while using lightweight low-rank matrices to preserve accuracy across modalities. The complete pipeline is illustrated in Figure~\ref{fig:MASQuant}.

\begin{table*}[ht!]
\setlength{\tabcolsep}{4pt}
\centering
\caption{\small Comparison of MASQuant with existing quantization methods on multimodal benchmarks. OCR: OCRBench. Viz: Vizwiz. S-QA: ScienceQA. T-VQA: TextVQA. SQ: SmoothQuant. The best results are highlighted in \textbf{bold}.}
\label{tab:main_vl}
\resizebox{\textwidth}{!}{%
\begin{tabular}{@{}l|c|!{\vrule width 1pt}cccccc|!{\vrule width 1pt}cccccc@{}}
\noalign{\vspace{0.1em}}\toprule
  \textbf{Method} &  &\multicolumn{6}{c}{\textbf{Qwen2.5-VL-3B}} & \multicolumn{6}{c}{\textbf{Qwen2.5-VL-7B}}  \\
\hline
   &Bits & MMMU  & OCR & Viz & S-QA & T-VQA & Avg& MMMU & OCR &Viz & S-QA & T-VQA & Avg\\
  & &Acc$\uparrow$ & Acc$\uparrow$ & Acc$\uparrow$ & Acc$\uparrow$ & Acc$\uparrow$ & Acc$\uparrow$ &Acc$\uparrow$ & Acc$\uparrow$ & Acc$\uparrow$ & Acc$\uparrow$ & Acc$\uparrow$ & Acc$\uparrow$ \\
\noalign{\vspace{0.1em}}\hline\noalign{\vspace{0.1em}}
   $\text{Dense}$ & FP16 & 42.2  & 79.3 & 69.1 & 81.9 & 77.9 & 70.1 & 46.7  & 83.8 & 70.8 & 88.4 & 82.9 & 74.5\\
\hline
\noalign{\vspace{0.1em}}
 $\text{RTN}$ & \multirow{4}{*}{W4A16}& 40.9  & 77.9  & 63.0 & 81.3 & 75.8 & 67.8  &  43.3  & 83.7 & 67.8 & 81.3 & 82.1 & 71.6\\
  $\text{AWQ}$ &  & 41.9 & 78.6 & \textbf{68.1} & 81.7 & 77.2 & 69.5 & 43.3 & 83.7 & 70.6 & 87.8 & 82.2 & 73.5 \\
  $\text{MBQ}$ & & 41.9  & 78.5 & 68.0 & 81.5 & 76.8 & 69.3 & 44.4 & 82.8 & 70.6 & 87.8 & \textbf{82.9} & 73.7\\
  \rowcolor[HTML]{EAE8FD}$\text{MASQuant(ours)}$ & & \textbf{43.3}  & \textbf{78.6} & 67.7 & \textbf{82.4} & \textbf{77.3} & \textbf{69.9} & \textbf{44.4} & \textbf{84.6} & \textbf{71.5} & \textbf{87.8} & 82.5 & \textbf{74.2}\\
\hline
  $\text{RTN}$ & \multirow{4}{*}{W8A8} &42.6  & 78.7 & 68.2 & 82.4 & 77.3 & 69.8 & 45.6 & 83.8 & 70.5 & 88.1 & 82.5 & 74.1\\
  $\text{SQ}$ & & 42.6  & 79.0 & 68.0 & 82.2 & 77.5 & 69.9 & 43.3 & 83.8 & 70.0 & 88.2 & 82.6 & 73.6\\
  $\text{MBQ}$ & & 43.3  & 78.9 & \textbf{68.6} & 81.8 & 77.8 & 70.1 &  \textbf{46.7} & 83.5 & 70.6 & 88.5 & \textbf{82.9} & 74.4\\
  \rowcolor[HTML]{EAE8FD}$\text{MASQuant(ours)}$ & & \textbf{46.6} &\textbf{79.5} & 68.2 & \textbf{82.4} & \textbf{77.7} & \textbf{70.9} & 46.2 & \textbf{84.2} & \textbf{70.6} & \textbf{88.6} & 82.6 & \textbf{74.4}\\
\hline
 $\text{RTN}$ & \multirow{4}{*}{W4A8} & 25.6 & 0.0 & 0.0 & 0.0 & 0.0 & 5.1 & 43.3 &  68.3 & 63.2 & 85.2 & 76.9  & 67.4\\
  $\text{SQ}$ & & 25.6  & 66.9 & 57.5 & 72.1 & 63.9 & 57.4 & 37.8 & 70.2 & 61.5 & 83.3 & 71.1 & 64.8\\
  $\text{MBQ}$ & & 41.2 & 66.9 & \textbf{65.0} & 76.7 & \textbf{73.4} & 64.6 & 43.3 & \textbf{74.1} & 64.3 & \textbf{86.0} & 74.8 & 68.5\\
  \rowcolor[HTML]{EAE8FD}$\text{MASQuant(ours)}$ &  &  \textbf{46.7} & \textbf{67.2} & 62.7 & \textbf{77.9} & 69.2 & \textbf{64.7} & \textbf{43.3} & 72.8 & \textbf{66.4} & 85.7 & \textbf{77.0} & \textbf{69.0} \\
\hline
   $\text{SQ}$ & \multirow{3}{*}{W4A6} & 22.5 & 66.7 & 52.9 & 69.9 & 55.9 & 53.6 & 28.9 & 67.7 & 60.5 & 78.6 & 69.3 & 61.0 \\
   $\text{MBQ}$ &  & 38.7 & 65.6 & \textbf{60.5} & 64.7 & \textbf{69.1} & 59.7 & \textbf{30.0} & \textbf{71.7} & 59.8 & \textbf{80.1} & 72.9 & 62.9\\
  \rowcolor[HTML]{EAE8FD}$\text{MASQuant(ours)}$ & & \textbf{40.0} & \textbf{66.7} & 56.2& \textbf{71.2} & 65.3 & \textbf{59.9} & 29.7 & 70.3 & \textbf{62.6} & 79.7 & \textbf{72.9} & \textbf{63.0} \\
\bottomrule
\noalign{\vspace{0.1em}}\noalign{\vspace{0.1em}}
\end{tabular}
}
\vspace{-1.0em}
\end{table*}

\begin{table*}[t!]
\setlength{\tabcolsep}{4pt}
\centering
\caption{\small Comparison of MASQuant with existing quantization methods on omni-modal MLLMs (vision, audio and text). SQ: SmoothQuant. Libri: Librispeech. Wen: Wenetspeech. The best results are highlighted in \textbf{bold}.}
\label{tab:main_omni}
\resizebox{\textwidth}{!}{%
\begin{tabular}{@{}l|c!{\vrule width 1pt}|cc|c|c!{\vrule width 1pt}|cc|c|c@{}}
\noalign{\vspace{0.1em}}\toprule
  &  &\multicolumn{4}{c}{\textbf{Qwen2.5-Omni-3B}} & \multicolumn{4}{c}{\textbf{Qwen2.5-Omni-7B}}  \\
\hline
 \textbf{Method} & Bits  &\multicolumn{2}{c}{Audio-Text} & {Vision-Text} & Vision-Audio-Text  & \multicolumn{2}{c}{Audio-Text} & {Vision-Text} & Vision-Audio-Text\\
   & &Libri & Wen & MMMU & Omnibench & Libri & Wen  & MMMU & Omnibench  \\
  & & WER$\downarrow$ & WER$\downarrow$ & Acc$\uparrow$  & Acc$\uparrow$ & WER$\downarrow$ & WER$\downarrow$ & Acc$\uparrow$ & Acc$\uparrow$ \\
\noalign{\vspace{0.1em}}\hline\noalign{\vspace{0.1em}}
   $\text{Dense}$ & FP16  & 3.9 & 7.5 & 43.3 & 43.8 & 2.9 & 7.1  & 50.0 & 45.3   \\
\hline
\noalign{\vspace{0.1em}}
 $\text{RTN}$ & \multirow{4}{*}{W4A16}& 4.4 & 9.2  & 32.2 & 39.6 & 3.4 & 6.7 & 44.4 & 43.8 \\
  $\text{AWQ}$ & & 9.4 & 8.0 & 36.7 & 43.8 & 8.7 & 7.1 & 47.8 & 45.3    \\
  $\text{MBQ}$  & & 8.2 & \textbf{7.0} & 38.9 & 42.6 & 3.5 & 6.6 & 47.8 & 44.4 \\
  \rowcolor[HTML]{EAE8FD}$\text{MASQuant(ours)}$ &  & \textbf{4.0} & 8.0  & \textbf{39.6} & \textbf{46.9} &  \textbf{2.7} & \textbf{7.4} & \textbf{48.8} & \textbf{45.3} \\
\hline
  $\text{RTN}$ & \multirow{4}{*}{W8A8} & 3.8 & 7.6  &  41.3 & 37.7 & 8.2 & 7.0 & 48.9 & 38.4\\
  $\text{SQ}$ & & \textbf{3.6} & \textbf{7.5}  & 41.7 & 44.5 & 8.2 & 7.3 & 48.8 & 47.6 \\
  $\text{MBQ}$ & & 8.9 & 7.8  & 41.8 & 46.1 & 8.1 & 7.1 & \textbf{50.0} & 46.4\\
  \rowcolor[HTML]{EAE8FD}$\text{MASQuant(ours)}$ &  & 3.8 & 7.9 & \textbf{42.3} & \textbf{46.1} & \textbf{2.8} & \textbf{6.9} & 49.9  & \textbf{46.9} \\
\hline
 $\text{RTN}$ & \multirow{4}{*}{W4A8} & 109.7 & 105.6 & 28.9 & 29.7 & 9.0 & 8.7  & 42.2 & 35.2 \\
   $\text{SQ}$ & & 77.4 & 94.2 & 30.0 & 27.3 & 8.6 & 8.3 & 42.2 & 36.7 \\
  $\text{MBQ}$ & & 9.5 & \textbf{8.5} & 27.8 & 36.7 & 3.8 & 8.2  & 47.8 & 40.6 \\
  \rowcolor[HTML]{EAE8FD}$\text{MASQuant(ours)}$ & & \textbf{3.6} & 8.7  & \textbf{36.7} & \textbf{41.4} & \textbf{2.9} & \textbf{8.0} & \textbf{48.8} & \textbf{43.8}\\
\hline
 $\text{RTN}$ & \multirow{4}{*}{W4A6} & 87.8 & 99.6 & 28.9 & 23.4 & 11.8 &  15.7 & 28.9 & 40.6 \\
   $\text{SQ}$ &  & 87.8 & 99.5  & 28.9 & 18.8 & 5.8 & 19.8  & \textbf{37.8} & 35.9 \\
   $\text{MBQ}$ & & 10.8 & 10.4 & 31.4 & \textbf{46.9} & 6.4 & 14.8 & 33.3 & 42.1\\
  \rowcolor[HTML]{EAE8FD}$\text{MASQuant(ours)}$ & & \textbf{3.7} & \textbf{8.9}  & \textbf{32.2} & 42.2 & \textbf{4.7} & \textbf{8.7}  & 36.8 & \textbf{42.2}\\
\bottomrule
\noalign{\vspace{0.1em}}\noalign{\vspace{0.1em}}
\end{tabular}
}
\vspace{-1.0em}
\end{table*}
\section{Experiments}
\subsection{Experimental Setups}

We evaluate on {Qwen2.5-VL}~\citep{qwen2.5-vl} and Qwen2.5-Omni~\citep{qwen2.5omni}, multimodal LLMs supporting text, audio, and vision. The architecture of Omni MLLMs includes 675M Vision Transformer-based~\citep{vit} vision encoder, Whisperlarge-v3~\citep{wisper} based audio encoder, LLM Transformer decoder Thinker, speech output Talker, and Streaming Codec Decoder. For generality, we quantize only the LLM component {Thinker}. We evaluate several state-of-the-art channel-wise smoothing-based quantization methods: {AWQ}~\citep{awq}, {SmoothQuant}~\citep{smoothquant}, and {MBQ}~\citep{mbq}. 
We evaluate model performance across multiple multimodal benchmarks: {Audio-to-Text Tasks}: Evaluated on {Librispeech~\citep{libri} dataset test-other split} and {Wenetspeech~\citep{wenetspeech} dataset test-net split}, using {Word Error Rate (WER)} as metric. {Visual Reasoning Tasks}: Evaluated on {OCRBench~\citep{ocrbench}}, TextVQA~\citep{textvaq}, Vizwiz~\citep{vizwiz}, ScienceQA~\citep{scienceqa} and {MMMU~\citep{mmmu}}.{Multimodal Reasoning Tasks}: Evaluated on {OmniBench~\citep{omnibench}}, covering joint text-audio-visual reasoning.

\subsection{Vision-Language MLLMs}
Table~\ref{tab:main_vl} evaluates MASQuant on Qwen2.5-VL models. MASQuant matches FP16 performance on both model sizes at W8A8, suggesting that MLLMs can be quantized to 8 bits without accuracy loss when modality-specific characteristics are properly handled. RTN completely fails on W4A8, while SmoothQuant severely degrades. This failure pattern reveals that modality dominance becomes catastrophic at aggressive quantization levels, where smoothing factors dictated by the dominant modality harm the weaker modality. 

\subsection{Vision-Audio-Language MLLMs}
Table~\ref{tab:main_omni} evaluates quantization on omni-modal models handling vision, audio, and text. The results reveal that modality dominance intensifies as modality diversity increases. At W4A8, SmoothQuant's audio performance collapses catastrophically on 3B: Librispeech WER jumps from 3.9 to 77.4, and Wenetspeech from 7.5 to 94.2. This 20× degradation, far worse than vision-language failures, demonstrates that audio is completely suppressed when competing with vision and text for smoothing resources. MASQuant maintains near-FP16 audio quality, confirming that per-modality smoothing prevents this collapse. Audio's vulnerability is intuitive: its smaller activation magnitudes make it the first victim of vision-dominated smoothing factors.

\subsection{Analysis}
\paragraph{Modality Dominance.}
We investigate modality dominance by applying SmoothQuant to Qwen2.5-VL and Qwen2.5-Omni. Figure~\ref{fig:modality_dominance} shows that visual tokens exhibit significantly larger activations than text in both attention and MLP layers. Consequently, smoothing factors track visual distributions while mismatching text by orders of magnitude. This confirms modality dominance is pervasive in MLLMs. We further analyze $\alpha_i$ distribution, which determines MAS's SQNR gain. Figures~\ref{fig:range_ratio_omni} and~\ref{fig:range_ratio_vl} show non-uniform $\alpha_i$ across components in both models, validating consistent improvements from MAS.

\begin{figure}[h!]
    \centering
\begin{subfigure}{0.49\columnwidth}
\centering
\includegraphics[width=\textwidth]{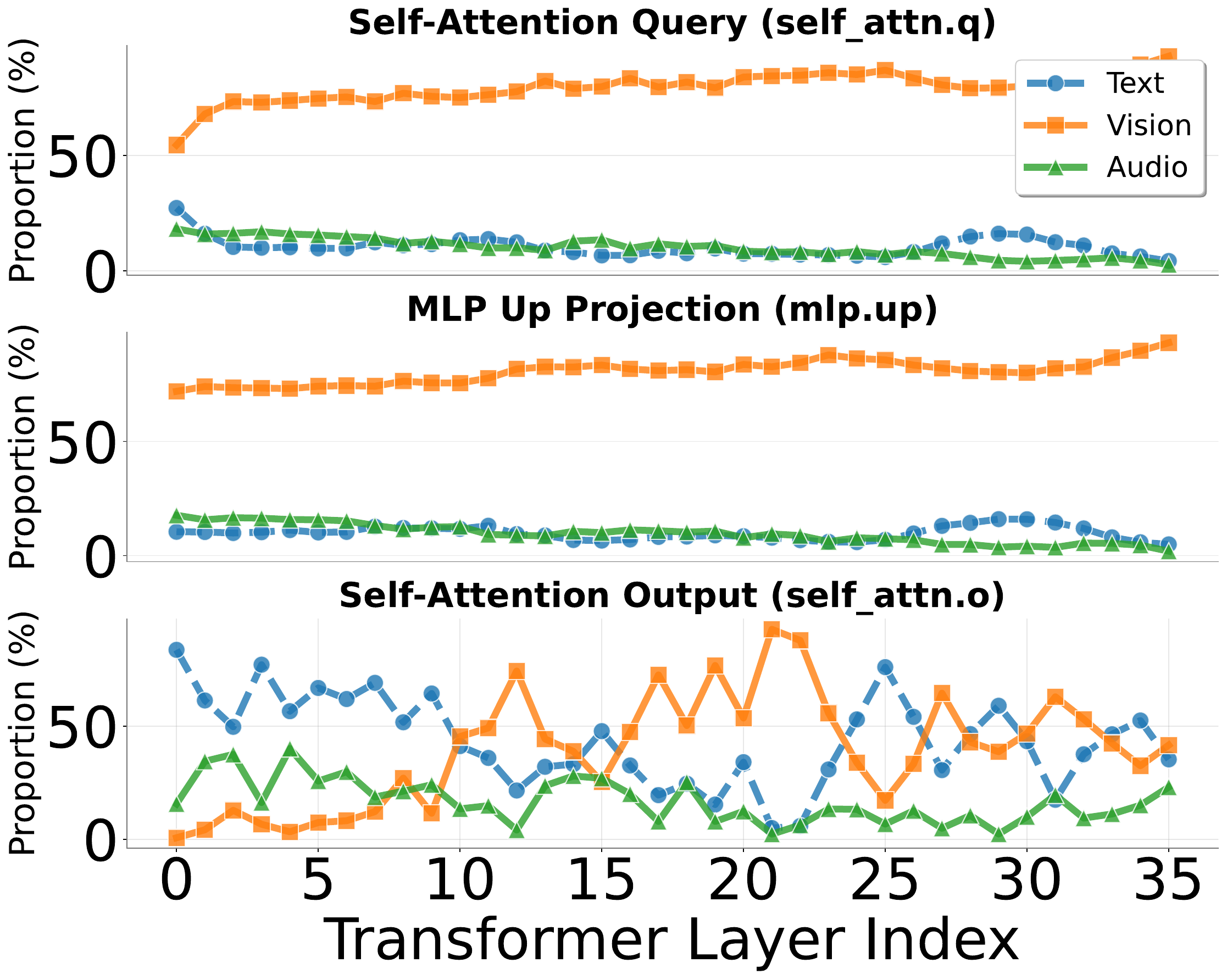}
\vspace{-3mm}
  \caption{\small Qwen2.5-Omni-3B}
  \label{fig:modality_dominance_omni}
\end{subfigure}
\begin{subfigure}{0.49\columnwidth}
\centering
\includegraphics[width=\textwidth]{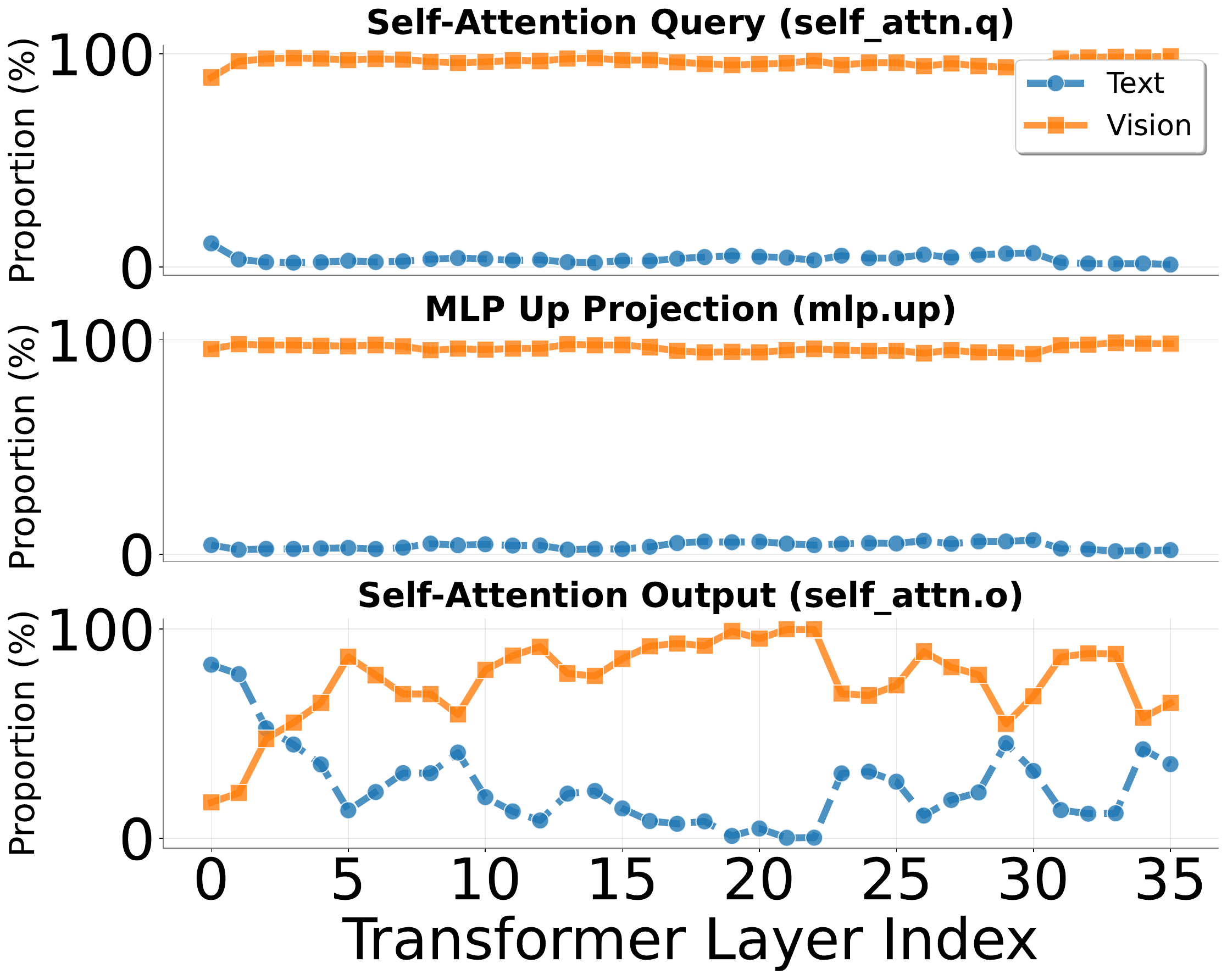}
\vspace{-3mm}
  \caption{\small Qwen2.5-VL-3B}
  \label{fig:modality_dominance_vl}
\end{subfigure}
\begin{subfigure}{0.49\columnwidth}
\centering
\includegraphics[width=\textwidth]{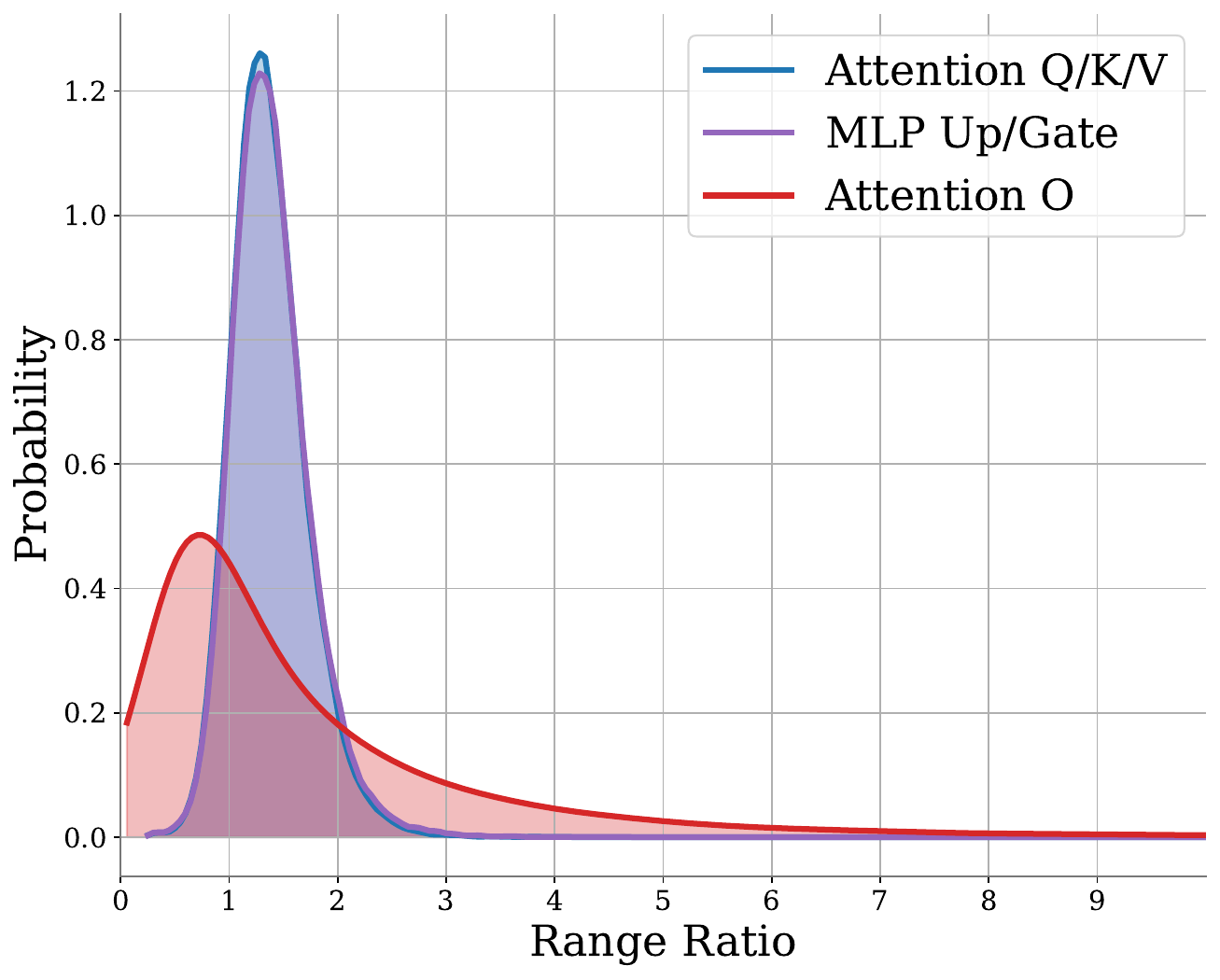}
\vspace{-3mm}
  \caption{\small Qwen2.5-Omni-3B}
  \label{fig:range_ratio_omni}
\end{subfigure}
\begin{subfigure}{0.49\columnwidth}
\centering
\includegraphics[width=\textwidth]{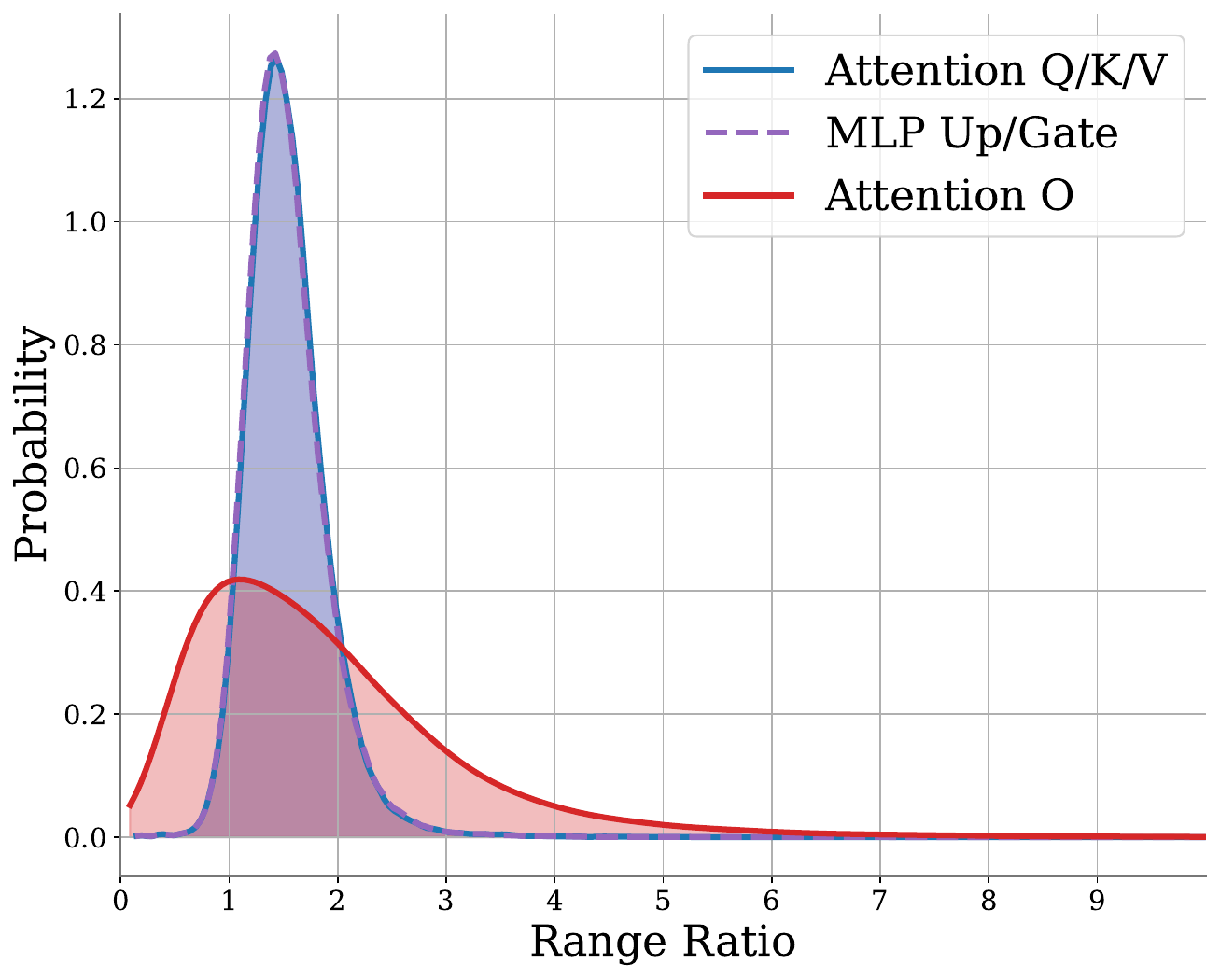}
\vspace{-3mm}
  \caption{\small Qwen2.5-VL-3B}
  \label{fig:range_ratio_vl}
\end{subfigure}
\caption{Percentage of unified smoothing factors from different modalities using SmoothQuant on Omni and VL MLLMs.}
\label{fig:modality_dominance}
\end{figure}

\paragraph{Effective Rank.}
CMC relies on weight differences having low-rank structure after whitening. We verify this on Qwen2.5-VL/Omni-3B using effective rank (lower is better). As shown in Figure~\ref{fig:effective_rank}, whitening reduces effective rank substantially, confirming our design.

\begin{figure}[h!]
    \centering
\begin{subfigure}{0.49\columnwidth}
\centering
\includegraphics[width=\textwidth]{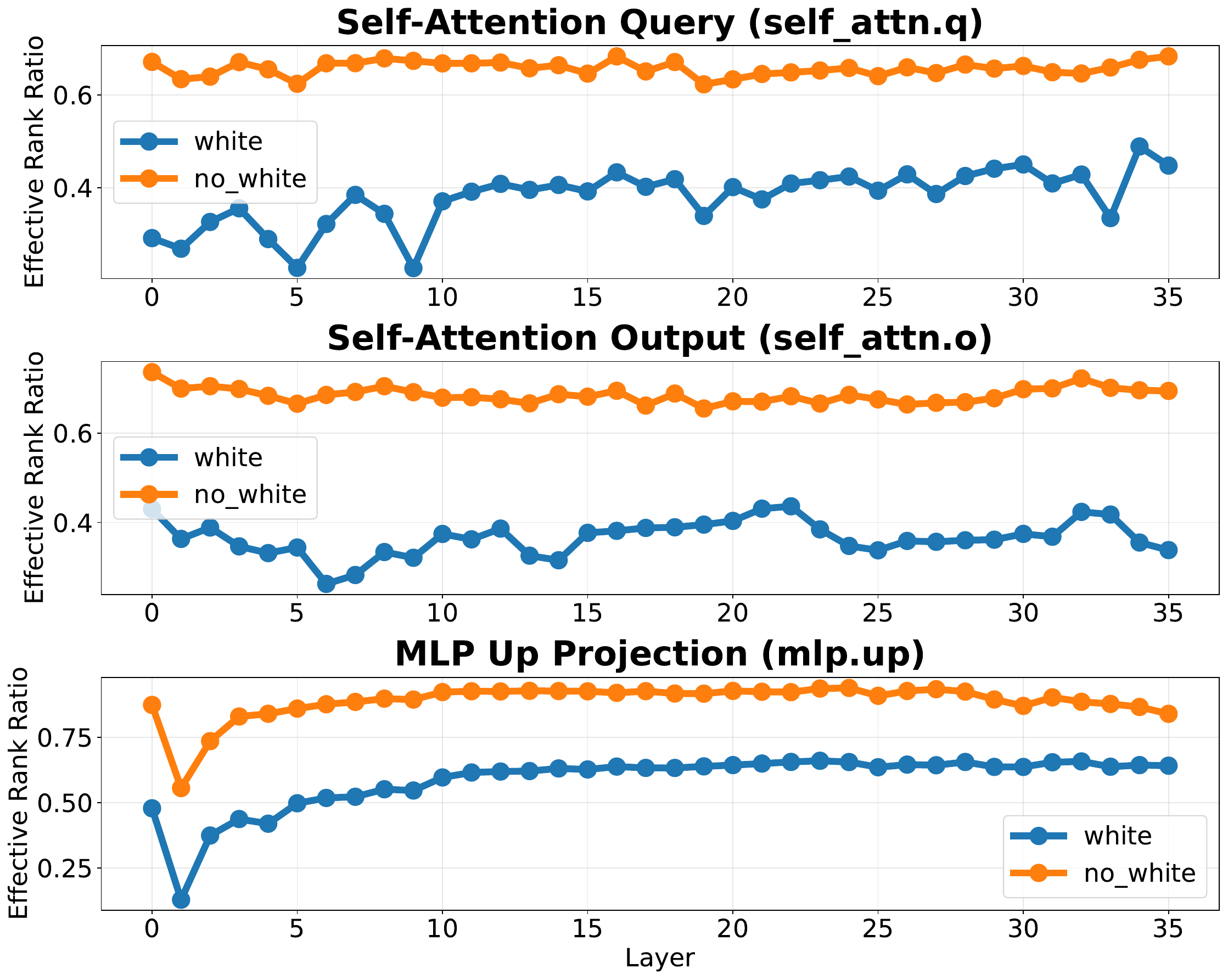}
\vspace{-3mm}
  \caption{\small Qwen2.5-VL-3B}
  \label{fig:effective_rank_vl}
\end{subfigure}
\begin{subfigure}{0.49\columnwidth}
\centering
\includegraphics[width=\textwidth]{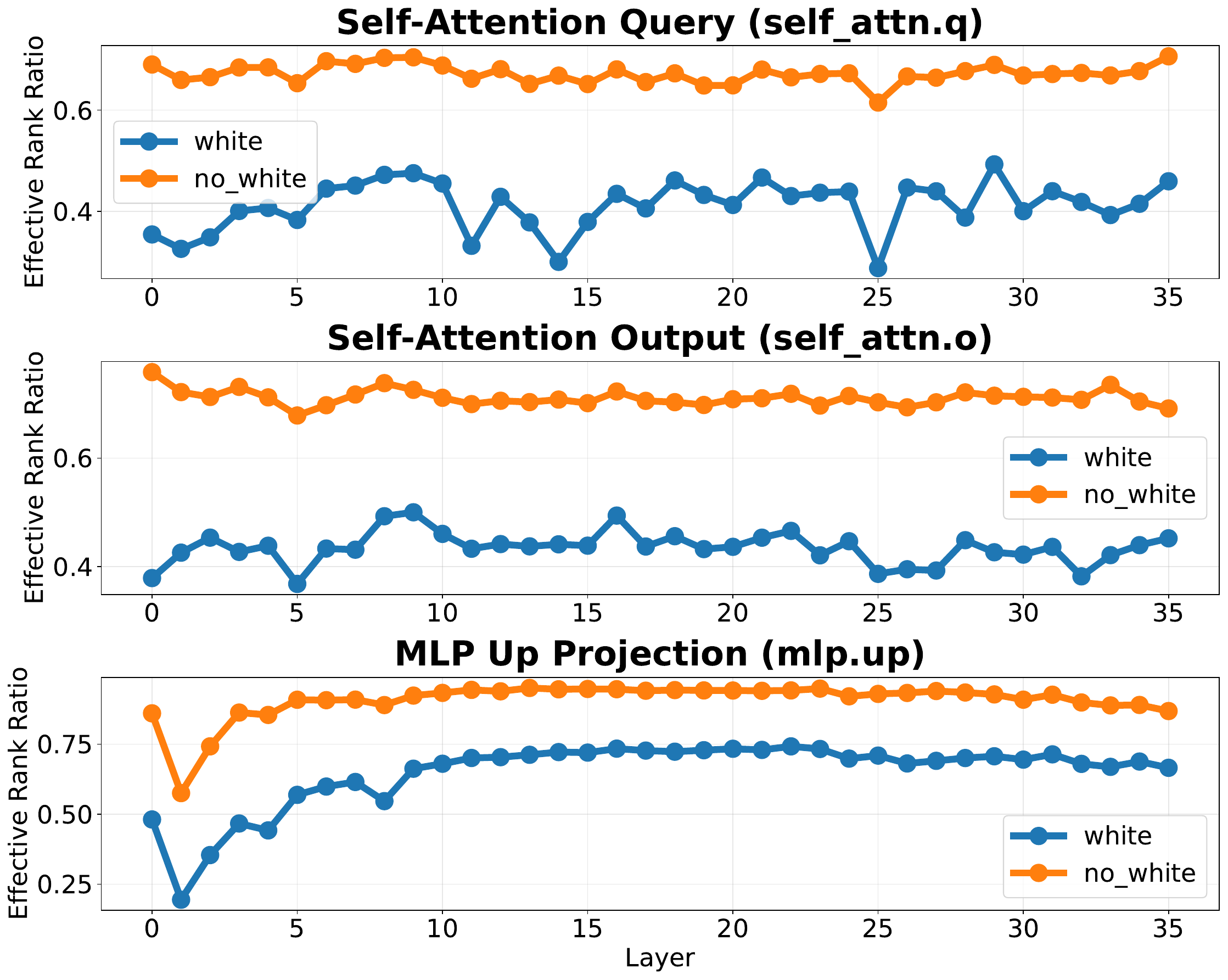}
\vspace{-3mm}
  \caption{\small Qwen2.5-Omni-3B}
  \label{fig:effective_rank_omni}
\end{subfigure}
\caption{Effective ranks of $\Delta\mathbf{W}$ is reduced across layers after SVD-based Whitening b SQNR improves as the rank ratio increases on both Qwen2.5-VL-3B and Qwen2.5-Omni-3B.}
\label{fig:effective_rank}
\end{figure}

\paragraph{The Effect of MAS.}
Table~\ref{tab:Effects} reveals the necessity of modality-aware smoothing in W4A8 quantization. The most striking result is on LibriSpeech: uniform smoothing yields 77.4 WER while MAS achieves 3.8 WER. This dramatic gap exposes a fundamental failure mode where uniform smoothing catastrophically degrades the weaker modality when forced to compromise between modalities with disparate activation ranges. Notably, learnable optimization amplifies rather than closes this gap.
\begin{table}[h!]
\centering
\caption{Effects of Modality-Aware Smoothing (W4A8).}
\vspace{-3mm}
\label{tab:Effects}
\begin{tabular}{@{}lc|cc|cc@{}}
\toprule
& & \multicolumn{2}{c|}{Omni-3B} & \multicolumn{2}{c}{VL-3B} \\
\cmidrule(lr){3-4} \cmidrule(lr){5-6}
Method & Opt. & Libri & OmniB. & T-VQA. & MMMU \\
\midrule
Uniform & & 77.4 & 27.3 & 51.3 & 25.6 \\
MAS & & \textbf{3.8} & \textbf{36.7} & \textbf{54.7} & \textbf{28.9} \\
\midrule
Uniform & \checkmark & 6.0 & 33.3 & 65.0 & 33.3 \\
MAS & \checkmark & \textbf{3.6} & \textbf{47.7} & \textbf{68.2} & \textbf{46.7} \\
\bottomrule
\end{tabular}
\end{table}
\vspace{-1pt}
\paragraph{Modality Loss Weight.}
Table~\ref{tab:mas_comparison_loss} shows ablation results of $\lambda_t$ and $\lambda_v$ on Qwen2.5-VL-3B. Equal smoothing works best: 17.2 PPL and 56.9\% average accuracy. Reducing vision smoothing to $\lambda_v{=}0.5$ drops MMMU from 37.8 to 30.0 with minimal text benefit. Further reducing to 0.1 collapses PPL to 33.7. Reducing text smoothing ($\lambda_t{=}0.5$) degrades all metrics. In this work, we set $\lambda_t{=}\lambda_v{=}1.0$. 

\begin{table}[t]
\centering
\caption{Effects of Modality Loss Weight~(W4A8).}
\vspace{-3mm}
\label{tab:mas_comparison_loss}
\begin{tabular}{@{}cc|c|cccc@{}}
\toprule
&  &  \multicolumn{5}{c}{Qwen2.5-VL-3B} \\
$\lambda_{t}$ & $\lambda_{v}$ & PPL & MMMU & T-VQA. & S-QA & Avg\\
\midrule
1.0 & 1.0 & 17.2 & 37.8 & 55.2 & 77.8 & 56.9\\
\hline
1.0 & 0.5 & 17.8 & 30.0 & 57.8 & 80.0 & 55.9 \\
\hline
1.0 & 0.1 & 33.7 & 32.2 & 65.4 & 76.7 & 58.1 \\
\hline
0.5 & 1.0 & 17.3 & 33.3 & 53.9 & 68.9 & 52.0 \\
\bottomrule
\end{tabular}
\end{table}
\vspace{-3mm}
\paragraph{Training Epochs.} Table~\ref{tab:mas_comparison} shows convergence on Qwen2.5-VL-3B. Performance improves rapidly: PPL drops from 23.9 to 17.0. Average accuracy peaks at epoch 2, then slightly declines. We use 2 epochs—a good trade-off between efficiency and performance.

\begin{table}[h]
\centering
\caption{Effects of Modality-Aware Smoothing (W4A8).}
\vspace{-3mm}
\label{tab:mas_comparison}
\begin{tabular}{@{}c|c|cccc@{}}
\toprule
&  \multicolumn{4}{c}{Qwen2.5-VL-3B} \\
Epoch & PPL & MMLU & T-VQA. & S-QA & Avg\\
\midrule
1 & 23.9 & 38.9 & 52.2 & 79.5 & 56.9 \\
\hline
2 & 18.6 & 37.8 & 67.5 & 78.4 & 61.2 \\
\hline
5 & 17.6 & 34.4 & 65.9 & 77.4 & 59.2 \\
\hline
10 & 17.0 & 37.8 & 68.2 & 76.5 & 60.8 \\
\bottomrule
\end{tabular}
\end{table}
\vspace{-5mm}
\paragraph{The Effect of CMC.}
Figure~\ref{fig:cmc_sqnr_effec_rank} plots SQNR vs. rank ratio. CMC dominates the Non-Whitened baseline in the low-rank regime, reducing the rank required to match MBQ by 4x. Specifically, CMC surpasses MAS at ratio of 0.08 and saturates near 3.5 SQNR, whereas the baseline struggles to match MBQ until a ratio of 0.4.

\begin{table}[h!]
\centering
  \caption{Computation cost and memory at decoding phase, where $d$ is hidden size, $r$ is rank, and $m$ is the number of extra modalities.}
  \vspace{-3mm}
 \begin{tabular}{@{}c|cc@{}}
    \toprule
    Base modality  & Computation & Memory \\
    \hline
    Text  & $d$ & 0\\
    \hline
    Others  & $d$ + $2rd$ & $2mrd$ \\
    \bottomrule
    \end{tabular}
\label{tb:computation cost}
\end{table}
\vspace{-8mm}
\paragraph{Base Modality Choice.} We choose text as the base modality to decouple CMC from decoding. As shown in Table~\ref{tb:computation cost}, other modalities would require CMC involvement, leading to extra costs of computation and memory.

\begin{figure}[h!]
    \centering
\begin{subfigure}{0.9\columnwidth}
\centering
\includegraphics[width=\textwidth]{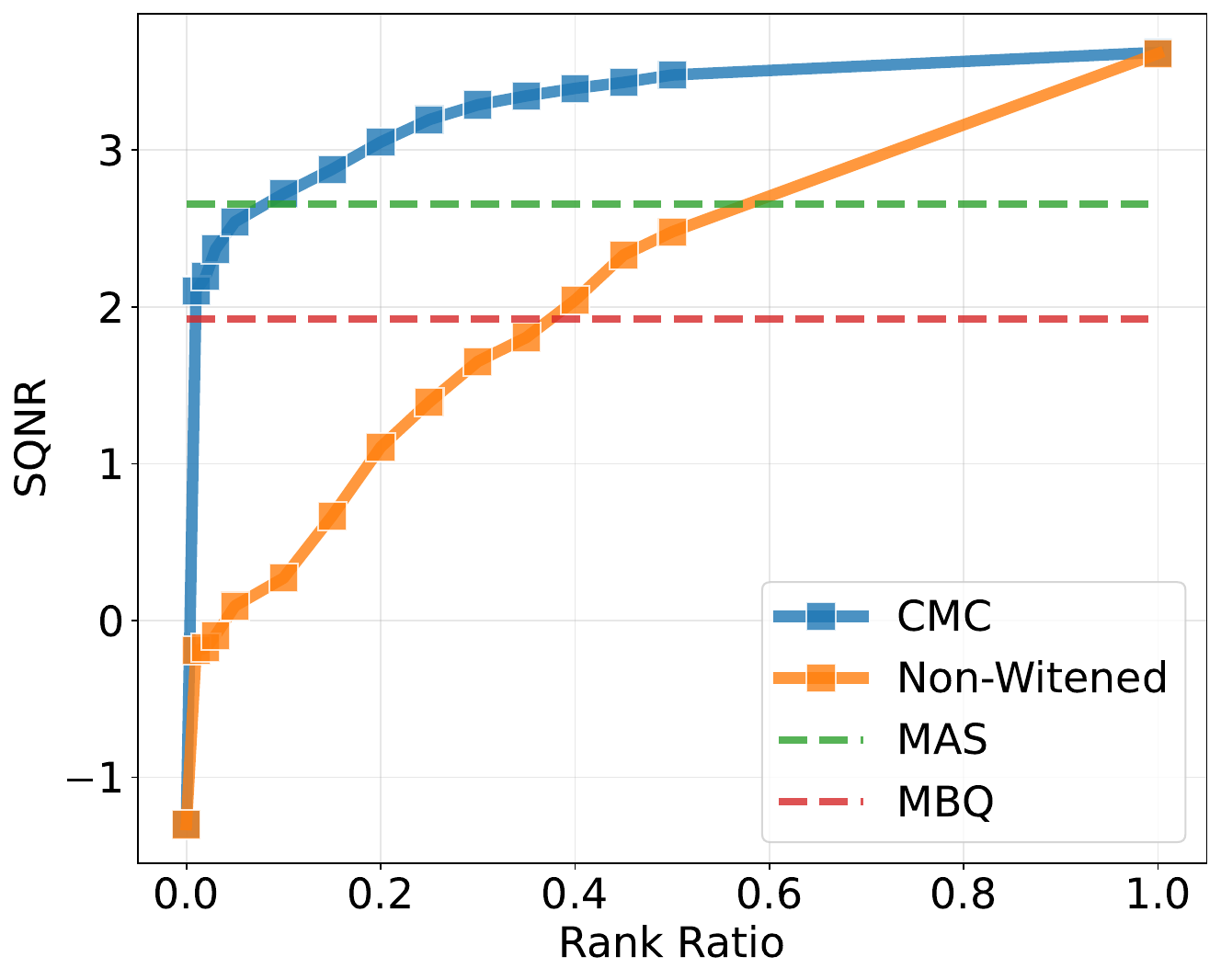}
\vspace{-3mm}
\end{subfigure}
\vspace{-3mm}
\caption{SQNR as a function of rank ratio for CMC on Qwen2.5-Omni-3B under W4A6 quantization. MAS applies independent modality-specific smoothing. MBQ employs a uniform factor optimized by modality balance reconstruction.}
\label{fig:cmc_sqnr_effec_rank}
\end{figure}

\subsection{Inference Speed Up}
\vspace{-2mm}
To validate practical efficiency, we implement a custom CUDA kernel based on Nunchaku~\citep{li2024svdquant} that fuses projection and quantization operators to minimize memory access. A multi-modal mask efficiently manages conditional low-rank execution. Table~\ref{tab:performance} shows that MASQuant achieves a 2.5x speedup over FP16 with marginal latency overhead (5--10\%) compared to MBQ, while maintaining identical decoding latency.

\vspace{-3mm}
\begin{table}[h]
\centering
\caption{End-to-end prefill-stage performance of Qwen2.5-VL-7B on Desktop RTX 4090 with fused GPU kernels (sequence length = 2048) under W4A4 setting. MAS: MASQuant. BS: Batch Size.}
\vspace{-10pt}
\label{tab:performance}
\resizebox{\linewidth}{!}{
\begin{tabular}{@{}c|c|c|cc|cc@{}}
\toprule
BS & Method & Rank  & Prefill  & Speedup & Mem & Mem  \\
 &  &  Ratio &  (ms) &  & (GB) &  Saving \\
\midrule
 & FP16 & - & 191.82 & / & 13.73 & /\\
\hline
\multirow{4}{*}{1} & MBQ & - &68.65 & \textbf{2.79}$\times$ & 4.85 & \textbf{2.83}$\times$ \\
 & MAS & 0.01 & 71.62 & \textbf{2.67}$\times$ & 4.97 & \textbf{2.76}$\times$ \\
 & MAS & 0.02 & 72.89 & \textbf{2.63}$\times$ & 5.04 & \textbf{2.73}$\times$ \\
 & MAS & 0.05 & 77.10 & \textbf{2.49}$\times$ & 5.37 & \textbf{2.56}$\times$ \\
\hline
 & FP16 & - & 2146.01 & / & 17.27 & /\\
\hline
 \multirow{4}{*}{8} & MBQ & - & 643.68 & \textbf{3.33}$\times$ & 8.89 & \textbf{1.94}$\times$ \\
 & MAS & 0.01 & 649.29 & \textbf{3.30}$\times$ & 9.02 & \textbf{1.92}$\times$ \\
 & MAS & 0.02 & 657.68 & \textbf{3.26}$\times$ & 9.08 & \textbf{1.90}$\times$ \\
 & MAS & 0.05 & 696.44 & \textbf{3.07}$\times$ & 9.42 & \textbf{1.83}$\times$ \\
\bottomrule
\end{tabular}
}
\end{table}
\vspace{-5mm}
\section{Conclusion}
In this work, we identify smoothing misalignment as the primary obstacle preventing channel-wise smoothing from applying to MLLMs. To address this, we propose the MASQuant which applies Modality-aware Smoothing and Cross-Modal Compensation to support modality-specific smoothing factors and single quantized weight during inference. Our approach is simple and highly effective. We demonstrate its efficacy on multi-modal benchmark.
{
    \small
    \bibliographystyle{ieeenat_fullname}
    \bibliography{main}
}


\end{document}